
\documentclass[letterpaper, 10 pt, conference]{ieeeconf}  

\pdfminorversion=4
\IEEEoverridecommandlockouts                              
\overrideIEEEmargins

\usepackage{color}
\usepackage{amssymb}  
\usepackage{booktabs}
\usepackage{tabularx}
\usepackage{arydshln}
\newlength\subtabdist
\setlength\subtabdist{1cm}
\usepackage[utf8]{inputenc}
\usepackage{graphicx}
\usepackage{xcolor}
\usepackage{MnSymbol}
\usepackage[linesnumbered,ruled]{algorithm2e}
\usepackage{tikz}
\usepackage[utf8]{inputenc}
\newtheorem{properties}{Property}
\newtheorem{proposition}{Proposition}
\newtheorem{problem}{Problem}
\newtheorem{definition}{Definition}
\newtheorem{remark}{Remark}
\newtheorem{theorem}{Theorem}
\usepackage{pgfplots} 
\pgfplotsset{compat=newest} 
\pgfplotsset{plot coordinates/math parser=false} 
\newlength\figureheight 
\newlength\figurewidth 
\usepackage[strict=true,style=english]{csquotes}

\newcommand{\conj}{\bigtriangleup}
\newcommand{\disj}{\bigtriangledown}
\newcommand{\nfnc}{n}
\newcommand{\impl}{\rhd}
\newcommand{\argmin}{\mathop{\mathrm{argmin}}} 

\newcommand{\argmax}{\mathop{\mathrm{argmax}}} 

\title{\LARGE \bf{
Arithmetic-Geometric Mean Robustness for
Control from Signal Temporal Logic Specifications
}}

\author{*Noushin Mehdipour$^{1}$, *Cristian-Ioan Vasile$^{2}$ and Calin Belta$^{1}$
\thanks{*These authors contributed equally. This work was partially supported at Boston University by the National Science Foundation under grants IIS-1723995, CPS-1446151, and CMMI-1400167}
\thanks{${}^1$ Noushin Mehdipour (noushinm@bu.edu), Calin Belta (cbelta@bu.edu) are with the Division of Systems Engineering at Boston University, Boston, MA, USA. ${}^2$ Cristian-Ioan Vasile (cvasile@mit.edu) is with the Laboratory for Information and Decision Systems (LIDS) at Massachusetts Institute of Technology, Cambridge, MA, USA.}%
}
\begin{document}
\maketitle
\thispagestyle{empty}
\pagestyle{empty}

\begin{abstract}
We present a new average-based robustness score for Signal Temporal Logic (STL) and a framework for optimal control of a dynamical system under STL constraints. By averaging the scores of different specifications or subformulae at different time points, our new definition highlights the frequency of satisfaction, as well as how robustly each specification is satisfied at each time point. We show that this definition provides a better score for how well a specification is satisfied. Its usefulness in monitoring and control synthesis problems is illustrated through case studies.
\end{abstract}
\section{INTRODUCTION}
\label{intro}
Formal methods have been recently used to express system behavior under complex temporal requirements, verify whether the system execution meets the desired requirements, or control the system to satisfy desirable specifications \cite{belta}. Temporal Logics including Linear Temporal Logics (LTL) \cite{ltl}, Metric Temporal Logic (MTL) \cite{mtl}, Signal Temporal Logic (STL) \cite{stl} and Time Window Temporal Logic (TWTL) \cite{twtl} allow precise description of system properties over time. STL is equipped with qualitative and quantitative semantics, meaning that it not only can assess whether the system execution meets the desired requirements but also provides a measure of how well requirements are met, also known as robustness. As a result, STL has been widely used for many control purposes including path planning and motion planning \cite{IROS}, \cite{shoukry} or synthesis problems \cite{dorsa}. Higher robustness score shows a stronger satisfaction of the desired specifications. Therefore, it is desirable to maximize the robustness score in order to improve system behavior to satisfy desired temporal specifications.

The traditional robustness score introduced in \cite{donze} is non-convex and non-differentiable; therefore, 
it is not possible to use powerful optimization techniques to maximize it. Previous works for control under STL constraints focused on using heuristic algorithms or encoding constraints as Mixed Integer Linear Programming (MILP). Heuristic optimization approaches such as Particle Swarm Optimization, Simulated Annealing and Rapidly Exploring Random Trees (RRTs) were used for synthesis, falsification and control problems \cite{cdc}, \cite{SA}, \cite{rrt}. Heuristic approaches do not require a smooth objective function; however, these algorithms do not always provide a guarantee to find the optima and have many user-defined parameters that need to be set in advance. 

Encoding temporal logic specifications as linear and boolean constraints was studied in \cite{raman}, \cite{milp} and MILP optimization solvers such as Gurobi were used to solve the control synthesis problem. The most critical issue with MILPs is that they do not scale well as the number of variables increases, resulting in a NP-complete problem. For instance, to encode the temporal operator $eventually$ as MILP constraints, we need to add integer (binary) variables for each time point in the specified interval. Therefore, this approach could fail when solving problems with many variables or complex temporal constraints. Moreover, MILP implementations require all constraints (including the system dynamics in the control problem) to be linear. As a result, nonlinear dynamics must be linearized, if linearizable, which involves approximation. 

Recently, there have been efforts to smooth the robustness function (score) in order to use gradient-based optimization algorithms. In \cite{husam}, \cite{li}, the authors used smooth approximations of maximum and minimum functions to define a smooth robustness score in order to solve a control problem. Even though these works solved the non-differentiability issue, the resulting smooth approximation had errors compared to the traditional robustness. Therefore, positive robustness did not necessarily mean satisfaction of the specification unless it was greater than a pre-defined threshold.

The main drawback of these works is that traditional robustness is defined by the most critical point (most satisfaction or most violation). In \cite{akazaki}, authors defined average STL robustness for continuous-time signals and defined positive and negative robustness to solve a falsification problem. Authors in \cite{filter} described MTL as linear time-invariant filters and used the average robustness for monitoring purposes. \cite{discrete} improved robustness for discrete signals by defining Discrete Average Space Robustness, and removed its nonsmoothness by approximating to a simplified version. These works refined robustness score only for temporal operators while using traditional maximum and minimum functions for other operators.

Our main contribution of this paper is proposing a new average-based robustness score, which we call Arithmetic-Geometric Mean (AGM) robustness. This new quantitative semantics uses arithmetic and geometric means to take into account the robustness degrees for all the subformulae and at every time point in the horizon, and not just the most satisfying or violating ones. As a result, our robustness definition rewards policies that satisfy the requirements at more time steps and with higher scores. We show that this novel robustness definition provides a better margin in which the specification is still satisfied when external disturbances or system perturbations exist. Moreover, our normalized signed robustness degree 
provides a meaningful comparison when specifications involve requirements over signals with different scales. The advantages of our new definition in both monitoring and control problems are illustrated through case studies through the paper. We compare our results with those obtained using MILPs and smooth approximation methods.
\section{PRELIMINARIES}
\label{preliminaries}
Let $f:\mathbb{R}^n \rightarrow \mathbb{R}$ be a real function. We define $[f]_ + = {\small \begin{cases} f & f > 0\\ 0 & \text{otherwise}\end{cases}}$ and $[f]_- = - [ - f]_+$, where $f =[f]_+ + [f]_-$.
\subsection{Signal Temporal Logics (STL)}
STL was introduced in \cite{stl} to monitor temporal properties of real-valued signals. Consider a discrete time sequence $\tau:=\{t_k |k\in\mathbb{Z}_{\geq 0}\}$. A \textit{signal} $S$ is a function $S:\tau\rightarrow\mathbb{R}^n$ that maps each time point $t_k \in \tau$ to an $n$-dimensional vector of real values $S[t_k]$, with $s_i$ being its $i$th component. Assume $[a,b]$ is the set of all $t_k \in \tau$ starting from $a$ up to $b$, with $a,b \in \tau; \; b > a\geq 0$. STL Syntax is defined as:
\begin{equation}
\label{eq:syntax}
\varphi:=\top \mid \mu \mid \lnot\varphi \mid \varphi_1\land\varphi_2 \mid \varphi_1\mathbf{U}_{[a,b]} \varphi_2,
\end{equation}
where $\top$ is the logical \textit{True}, $\mu$ is a \textit{predicate}, $\lnot$, $\land$ are the Boolean \textit{negation} and \textit{conjunction} operators, respectively, and $\mathbf{U}$ is the temporal \textit{until} operator.
Logical {\em False} is $\bot := \lnot \top$.
Other Boolean and temporal operators are defined as $\varphi_1\lor\varphi_2:=\lnot(\lnot\varphi_1\land\lnot\varphi_2)$, $\mathbf{F}_{[a,b]}\varphi:=\top\mathbf{U}_{[a,b]}\varphi$, $\mathbf{G}_{[a,b]}\varphi:=\lnot\mathbf{F}_{[a,b]}\lnot\varphi$.
In this paper, we focus on $\mathbf{F}$ and $\mathbf{G}$ operators, 
rather than $\mathbf{U}$. The temporal operator \textit{Finally} or \textit{eventually} ($\mathbf{F}_{[a,b]}\varphi$) states that \enquote{at some time point in $[a,b]$ the specification $\varphi$ must be True}; while \textit{globally} or \textit{always} ($\mathbf{G}_{[a,b]}\varphi$) states that \enquote{$\varphi$ must be True at all times in $[a,b]$}. The \textit{until} operator ($\varphi_1 \mathbf{U}_{[a,b]}\varphi_2$) states that \enquote{$\varphi_2$ must become True at some time point within $[a,b]$ and $\varphi_1$ must be always True prior to that}. A specification written in STL consists of predicates $\mu:=l(S)\geq 0$, where $l: \mathbb{R}^n \to \mathbb{R}$ is a real, possibly nonlinear, function defined over values of elements of $S$ (for instance, $s_1^3+s_2^2-2 \geq 0$ or $-2s_2+10 \geq 0$) connected by Boolean 
and temporal operators. 

The STL qualitative semantics shows \textit{whether} a signal $S$ satisfies a given specification $\varphi$ at time $t$, i.e., $S[t] \models \varphi$ or not, i.e., $S[t] \nmodels \varphi$, and its quantitative semantics, also known as \textit{robustness}, measures \textit{how much} the signal is satisfying or violating the specification. 
\begin{definition}[STL Robustness] 
Given a specification $\varphi$ and a signal $S$, the robustness score
$\rho(\varphi,S,t)$ at time $t$ is recursively computed as \cite{donze}: 
\begin{equation}
\label{eq:org}
\begin{aligned}
\rho ( {\top ,S,t} ) &: =\rho _{\top} ,\\
\rho ( {\bot ,S,t} ) &: =-\rho _{\top},\\
\rho ( {\mu ,S,t} ) &: =  l(S[t])),\\
\rho \left( {\neg \varphi ,S,t} \right) &: =  - \rho (\varphi ,S,t),\\
\rho \left( {\varphi_1  \wedge \varphi_2 ,S,t} \right) &:= \min \left( {\rho (\varphi_1 ,S,t),\rho (\varphi_2 ,S,t)} \right),\\
\rho \left( {\varphi_1  \vee \varphi_2 ,S,t} \right) &:= \max \left( {\rho (\varphi_1 ,S,t),\rho (\varphi_2 ,S,t)} \right),\\
\rho \left( {{\mathbf{G}_{[a,b]}}\varphi ,S,t} \right) &: = \mathop {\min }\limits_{t_k' \in{[t + a,t + b]}} {\rho (\varphi ,S,t_k')},\\
\rho \left( {{\mathbf{F}_{[a,b]}}\varphi ,S,t} \right) &: = \mathop {\max }\limits_{t_k' \in{[t + a,t + b]}} {\rho (\varphi ,S,t_k')},
\end{aligned}
\end{equation}
where $\rho _{\top} \in \mathbb{R} \cup \{+\infty \}$ is the maximum robustness.
\end{definition}
\begin{theorem}
The robustness score is sound, meaning that $\rho \left( {\varphi ,S,t} \right) > 0$ implies that signal $S$ satisfies $\varphi$ at time $t$, and $\rho \left( {\varphi ,S,t} \right) < 0$ implies that $S$ violates $\varphi$ at time $t$.
\end{theorem}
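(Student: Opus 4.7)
The plan is to prove both implications simultaneously by structural induction on $\varphi$, following the STL syntax given in (\ref{eq:syntax}). The induction hypothesis at each step is that for every strict subformula $\psi$ of $\varphi$ and every time $t$, $\rho(\psi,S,t) > 0 \Rightarrow S[t] \models \psi$ and $\rho(\psi,S,t) < 0 \Rightarrow S[t] \nmodels \psi$. The claim for $\varphi$ is then derived from the recursive definition in (\ref{eq:org}) and the standard qualitative semantics of STL (which the paper invokes implicitly).

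For the base cases, the statement is immediate: $\rho(\top,S,t) = \rho_{\top} > 0$ and $\top$ is always satisfied; $\rho(\bot,S,t) = -\rho_{\top} < 0$ and $\bot$ is never satisfied; and for a predicate $\mu := l(S) \geq 0$, the quantitative value $\rho(\mu,S,t) = l(S[t])$ has the same sign as the truth value of $l(S[t]) \geq 0$ (strictly). For the negation case, the identity $\rho(\neg\varphi,S,t) = -\rho(\varphi,S,t)$ flips both the sign and the satisfaction, and the claim follows directly from the induction hypothesis applied to $\varphi$. For the conjunction case, I would use the elementary fact that $\min(x,y) > 0 \iff x > 0 \wedge y > 0$ and $\min(x,y) < 0 \iff x < 0 \vee y < 0$, combining this with the induction hypothesis applied to $\varphi_1$ and $\varphi_2$ to match the semantics of $\wedge$. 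Disjunction is handled symmetrically using the dual identities for $\max$, and also follows from the derived definition $\varphi_1 \vee \varphi_2 := \neg(\neg\varphi_1 \wedge \neg\varphi_2)$ together with the already-proved negation and conjunction cases.

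For the temporal operators, the same $\min$/$\max$ sign arguments generalize to quantification over the time window $[t+a,t+b]$. For $\mathbf{G}_{[a,b]}\varphi$, $\min_{t_k'} \rho(\varphi,S,t_k') > 0$ iff $\rho(\varphi,S,t_k') > 0$ at every $t_k' \in [t+a,t+b]$, which by induction gives $S[t_k'] \models \varphi$ for all such $t_k'$, i.e., $S[t] \models \mathbf{G}_{[a,b]}\varphi$; the negative direction is analogous, since a single $t_k'$ where $\rho(\varphi,S,t_k') < 0$ witnesses violation. The $\mathbf{F}$ case is handled by the dual argument on $\max$. The $\mathbf{U}$ case (not spelled out in the definition block, but part of the syntax) would be reduced either to its standard recursive robustness definition or through the derived equivalences with $\mathbf{F}$ and $\mathbf{G}$ already established.

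The argument is essentially routine once the right identities for $\min$ and $\max$ are invoked, and the main subtlety — really the only place a reader should pause — is the strictness of the inequalities: the implications are one-directional because $\rho(\varphi,S,t) = 0$ is a boundary case where the robustness score gives no information about satisfaction. This is why the theorem is stated as two separate implications with strict inequalities rather than as a biconditional, and it is important to preserve this strictness carefully at the $\min$/$\max$ steps (where, e.g., $\min(x,y) > 0$ genuinely requires both arguments strictly positive). No other step poses a real obstacle.
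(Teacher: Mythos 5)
Your proof is correct, but there is nothing in the paper to compare it against line by line: the paper states this theorem for the traditional robustness \emph{without proof}, treating it as a known result imported from the literature on the quantitative semantics of \cite{donze} (the original soundness argument is due to Fainekos and Pappas). The structural induction you outline, driven by the strict-sign equivalences $\min(x,y)>0 \Leftrightarrow (x>0 \text{ and } y>0)$ and $\min(x,y)<0 \Leftrightarrow (x<0 \text{ or } y<0)$ together with their $\max$ duals over the finite discrete window $[t+a,t+b]$, is exactly the standard argument, and it also matches the skeleton the paper does spell out for its own soundness result, Theorem~2 for the AGM robustness. The difference there is instructive: because the AGM score is defined by a case split on satisfaction (equations~\eqref{eq:agm-sat} versus~\eqref{eq:agm-unsat}) rather than by a single $\min$/$\max$ expression, the paper must argue each induction case by contradiction (assume a subformula robustness has the wrong sign and show the defining formula then lands in the other case), whereas your direct sign equivalences suffice for $\rho$. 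Your closing observation about strictness --- that $\rho=0$ is uninformative and this is why the theorem is two one-directional implications rather than a biconditional --- is exactly the right point to isolate.

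One concrete slip: your fallback for the $\mathbf{U}$ case, reducing it ``through the derived equivalences with $\mathbf{F}$ and $\mathbf{G}$ already established,'' does not work, since until is not expressible from eventually and globally; the dependency runs the other way ($\mathbf{F}_{[a,b]}\varphi := \top\mathbf{U}_{[a,b]}\varphi$ and $\mathbf{G}$ via negation of $\mathbf{F}$, as in the paper's preliminaries). The correct route is the first option you mention: the standard recursive robustness $\rho(\varphi_1\mathbf{U}_{[a,b]}\varphi_2,S,t)=\max_{t'\in[t+a,t+b]}\min\bigl(\rho(\varphi_2,S,t'),\min_{t''\in[t,t']}\rho(\varphi_1,S,t'')\bigr)$, to which the same nested $\max$/$\min$ sign analysis applies verbatim. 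Note, however, that the paper quietly sidesteps this entirely: its robustness definition~\eqref{eq:org} contains no clause for $\mathbf{U}$ and the authors explicitly restrict attention to $\mathbf{F}$ and $\mathbf{G}$, so your handling of until is more complete than what the paper needs, provided you drop the incorrect reduction. A last cosmetic point: your base case assumes $\rho_\top>0$, which the paper implies (calling $\rho_\top$ the ``maximum robustness'') but never states; it is worth making that assumption explicit.
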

We denote the robustness score of specification $\varphi$ at time $0$ with respect to the signal $S$ by $\rho(\varphi,S)$. We refer to this definition as traditional robustness score.
\subsection{$\min$/$\max$ Approximation}
The $\min$ and $\max$ functions in the robustness definition in \eqref{eq:org} result in a non-differentiable robustness score. This non-differentiability can be removed by replacing $\max$ and $\min$ functions with the following smooth approximations: 
\begin{equation}
\label{eq:soft}
\begin{array}{l}
{\max_\beta}(a_1,\ldots,a_m) \approx \frac{1}{\beta}\ln\sum_{i=1}^m e^{\beta a_i}, \\
{\min_\beta}(a_1,\ldots,a_m) \approx - \frac{1}{\beta}\ln\sum_{i=1}^m e^{-\beta a_i}.
\end{array}
\end{equation}
For different values of $\beta$ different robustness scores are found, resulting in an error. It is shown in \cite{li} that the approximation error approaches $0$ as $\beta$ goes to $\infty$. We refer to this definition as approximation robustness score $\tilde{\rho}$.
\section{PROBLEM STATEMENT}
Consider a discrete-time dynamical system given by:
\begin{equation}
\label{eq:dynamics}
\begin{array}{l}
q[k+1]=f(q[k],u[k]),\\
q[0]=q_0,
\end{array}
\end{equation}
where $q[k] \in \mathbf{Q} \subseteq\mathbb{R}^n$ is the state of the system and $u[k] \in \mathbf{U} \subseteq\mathbb{R}^m$ is the control input at the $k$th time step $k \in \mathbb{Z}_{\geq 0}$; $q_0 \in \mathbf{Q} $ is the initial state and $f$ is a 
function representing the dynamics of the system. Given the initial state $q_0$ and control sequence $u=\{u[0]u[1]...\}$, system trajectory $q=\{q[0]q[1]q[2]...\}$ is generated using \eqref{eq:dynamics}; which we denote by $\langle q,u \rangle$. Consider a 
cost function $J(u[k],q[k+1])$ representing the cost of applying the control input $u[k]$. Assume system temporal requirements are given by a STL formula $\phi$ with a time horizon $T$, which is the largest time step for which signal values are needed in order to compute the robustness for the current time point. 
The control synthesis problem can be formulated as determining a control policy $u^*=\{u^*[0]u^*[1]...u^*[T-1]\}$ such that the system trajectory satisfies the STL specification $\phi$ while optimizing the cost:
\begin{equation}
\label{eq: cost}
\begin{array}{c}
u^*=\argmin _u \sum\limits_{k=0}^{T-1} J(u[k],q[k+1]),\\
\text{s.t.}\;\;\;\;\; \langle q,{u}\rangle\models\phi.\\
\end{array}
\end{equation}
As stated in the Sec.~\ref{intro}, previous works used heuristic algorithms, MILP encoding, and gradient ascent to solve \eqref{eq: cost} and found control policies to generate trajectories that satisfy STL constraints with the traditional and smooth robustness definition. The main shortcoming of the traditional robustness score is that it only considers the robustness of the most satisfying or violating part of the specification without taking into account satisfaction of the other parts. We address this limitation by defining a new version of robustness. 
\section{ARITHMETIC-GEOMETRIC MEAN (AGM) ROBUSTNESS}
We define a novel robustness score $\eta$ based on arithmetic and geometric means
instead of the $\max$ and $\min$ functions in the traditional definition.
We show that our normalized signed robustness score $\eta \in [-1,1]$
provides a better understanding of system properties, 
where $\eta \in (0,1]$ corresponds to satisfaction of the specification,
$\eta \in [-1,0)$ shows violation, and $\eta =0$ indicates inconclusiveness.
Moreover, $| \eta |$  is a measure of how well the specification is satisfied or violated. 

Consider a discrete time series $\tau:=\{t_k |k\in\mathbb{Z}_{\geq 0}\}$. Signal $S$ is a function $S:\tau\rightarrow\mathbb{R}^n$ that maps each time point $t_k \in \tau$ to an $n$-dimensional vector of real values $S[t_k]$, with $s_i$ being its $i$th element. 
Throughout the definitions and proofs, we assume that we have bounded signals, and all their components are normalized to the interval $[-1, 1]$.
\begin{definition}[AGM Robustness]
Let $S: \tau \to {[-1,1]}^n$ and $\varphi: s_i - \pi \ge 0$ where $\pi \in [-1,1]$
. The normalized signed AGM robustness $\eta (\varphi,S,t)$ 
with respect to the signal $S$ at time $t$ is defined as:
\begin{equation}
\label{eq:agm-def-base-cases}
\begin{aligned}
\eta (\top,S ,t) &: = 1\\
\eta (\bot,S ,t) &: = -1\\
\eta (\varphi,S ,t) &: =\frac{1}{2}( s_i[t]  - {\pi }),\\
\eta (\lnot \varphi ,S ,t) &: =  - \eta(\varphi ,S ,t).
\end{aligned}
\end{equation}
For combination of other boolean and temporal operators in a time interval $[a,b]$, AGM robustness is recursively defined using \eqref{eq:agm-sat} and \eqref{eq:agm-unsat}; with $[a,b]=\{t_k | t_k,a,b \in \tau;a \leq t_k \leq b; \; \; b > a\geq 0 \}$ and $N$ being the number of time points in $[a,b]$.
\begin{figure*}[htb]
\begin{equation}
\label{eq:agm-sat}
\begin{array}{l}
\eta ({\varphi _1} \land ... \land {\varphi _m},S,t\mid \forall i \in [1,...,m]\;.\;\eta ({\varphi _i},S,t) > 0): = \sqrt[m]{{\prod\limits_{i = 1,...,m} {\left( {1 + \eta ({\varphi _i},S,t)} \right)} }} - 1 \\ 
\\
\eta ({\varphi _1} \lor ... \lor{\varphi _m},S ,t \mid \exists i \in [1,...,m] \ .\ \eta ({\varphi}_i,S,t)>0) : = \frac{1}{m}\sum\limits_{i=1,...,m} [\eta(\varphi_i, S, t)]_+ \\
\\
\eta ({{\mathbf{G}}_{[a,b]}}\varphi ,S ,t \mid \forall t_k' \in [t+a, t+b] \ .\ \eta ({\varphi},S,t_k')>0) : = \sqrt[N]{ {\prod\limits_{t_k' \in [t+a,t+b] } {\left( {1 + \eta (\varphi, S, t_k')} \right)} } } - 1\\
\\
\eta ({{\mathbf{F}}_{[a,b]}}\varphi ,S ,t \mid \exists t_k' \in [t+a, t+b] \ .\ \eta ({\varphi},S,t_k')>0 ) : = \frac{1}{N}\sum\limits_{t_k' \in[t+a,t+b] } [\eta(\varphi, S, t_k')]_+ 
\end{array}
\end{equation} \vspace{-10pt}
\end{figure*}
\begin{figure*}[htb]
\begin{equation}
\label{eq:agm-unsat}
\begin{array}{l}
\eta ({\varphi _1} \land ... \land {\varphi _m},S,t\mid \exists i \in [1,...,m] \ .\ \eta ({\varphi}_i,S,t) \le 0)
:=\frac{1}{m}\sum\limits_{i=1,...,m} [\eta(\varphi_i, S, t)]_- \\
\\
\eta (\varphi_1 \lor ...\lor \varphi_m,S,t \mid \forall i \in [1,...,m]\;.\;\eta ({\varphi _i},S,t) \le 0) :=-\sqrt[m]{{\prod\limits_{i = 1,...,m} {\left( {1 - \eta ({\varphi _i},S,t)}  \right)}}} + 1\\
\\
\eta(\mathbf{G}_{[a,b]}\varphi, S, t \mid \exists t_k' \in [t+a, t+b] \ .\ \eta ({\varphi},S,t_k')\le 0) :=
\frac{1}{N}\sum\limits_{t_k'\in [t+a,t+b]} [\eta(\varphi, S, t_k')]_-\\
\\
\eta(\mathbf{F}_{[a,b]}\varphi, S, t \mid \forall t_k' \in [t+a, t+b] \ .\ \eta ({\varphi},S,t_k')\le 0 ) := 
-\sqrt[N]{ {\prod\limits_{t_k' \in [t+a,t+b] } {\left( {1 - \eta (\varphi, S, t_k')} \right)}}} + 1
\end{array}
\end{equation}
\end{figure*}
\end{definition}
Same as before, when the time of satisfaction is not mentioned, satisfaction at time $0$ is considered, i.e., $\eta(\varphi,S)=\eta(\varphi,S,0)$. We first find robustness for each individual subformula using \eqref{eq:agm-def-base-cases}. \textbf{Algorithm \ref{alg: and}}, \textbf{Algorithm \ref{alg: or}}, \textbf{Algorithm \ref{alg: G}} and \textbf{Algorithm \ref{alg: F}} then determine satisfaction or violation of the specification with respect to the signal $S$ as well as the normalized signed AGM robustness score.

\begin{remark}
The command $ANY$ used in the AGM robustness algorithms is employed to check satisfaction or violation of the specification and determine the resulting robustness score, for which the worst case complexity is ${\rm O}(n)$.
\end{remark}\vspace{2pt}
\begin{theorem}[Soundness]
\label{th:soundness}
The AGM robustness score is sound, meaning that a satisfying trajectory has a strictly positive robustness:
\begin{equation}
\label{eq:sound}
\begin{aligned}
\eta(\varphi,S,t)>0 \Leftrightarrow \rho(\varphi,S,t)>0
\Rightarrow S\models\varphi,
\\
\eta(\varphi,S,t)<0 \Leftrightarrow \rho(\varphi,S,t)<0 
\Rightarrow S\not\models\varphi.\\
\end{aligned}
\end{equation}
\end{theorem}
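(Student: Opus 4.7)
The plan is to prove the biconditionals $\eta(\varphi,S,t) > 0 \Leftrightarrow \rho(\varphi,S,t) > 0$ and $\eta(\varphi,S,t) < 0 \Leftrightarrow \rho(\varphi,S,t) < 0$ by structural induction on $\varphi$. Once these are in hand, the final implications $\rho > 0 \Rightarrow S \models \varphi$ and $\rho < 0 \Rightarrow S \not\models \varphi$ are immediate from the soundness of traditional robustness stated in Theorem 1, so no independent semantic argument is needed.

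For the base cases I will check $\top$, $\bot$, and a predicate $\mu: s_i - \pi \geq 0$ directly against the two definitions: $\eta(\top)=1$ matches $\rho(\top)=\rho_\top>0$, $\eta(\bot)=-1$ matches $\rho(\bot)=-\rho_\top<0$, and for a predicate both $\eta(\mu)=\frac{1}{2}(s_i[t]-\pi)$ and $\rho(\mu)=s_i[t]-\pi$ share the sign of $s_i[t]-\pi$. Negation flips the sign in both definitions identically, so the equivalence is carried through $\neg$.

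In the inductive step, assume the two equivalences hold for every proper subformula of $\varphi$. I will treat $\land$, $\lor$, $\mathbf{G}_{[a,b]}$, and $\mathbf{F}_{[a,b]}$ by case-splitting on the sign pattern of the subformulae's robustness values, since exactly this case-split drives the definition of $\eta$ in \eqref{eq:agm-sat} versus \eqref{eq:agm-unsat}. For a conjunction, if every $\rho(\varphi_i)>0$ then by the inductive hypothesis every $\eta(\varphi_i)>0$, placing us in the first case of \eqref{eq:agm-sat}, where $\eta=\sqrt[m]{\prod_i(1+\eta(\varphi_i))}-1$ is strictly positive because each factor exceeds $1$; conversely, if some $\rho(\varphi_j)\leq 0$ then some $\eta(\varphi_j)\leq 0$, placing us in the \eqref{eq:agm-unsat} branch where $\eta=\frac{1}{m}\sum_i[\eta(\varphi_i)]_-$ is nonpositive and vanishes only when all negative-part terms vanish, matching $\min_i\rho(\varphi_i)$. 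The disjunction case is dual, using the arithmetic-mean-of-positive-parts formula when some subformula satisfies and the product-based formula when all violate, yielding a value strictly less than $1$ in absolute terms whose sign matches $\max_i\rho(\varphi_i)$. The temporal cases $\mathbf{G}_{[a,b]}$ and $\mathbf{F}_{[a,b]}$ are structurally identical to $\land$ and $\lor$, with the recursion ranging over the $N$ time points in $[t+a,t+b]$ rather than over a finite index set, so the same sign-preservation argument applies verbatim.

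The main obstacle I expect is careful bookkeeping of the zero case. The two stated biconditionals together implicitly force $\eta=0 \Leftrightarrow \rho=0$, so in each inductive case I must verify that a subformula with $\rho(\varphi_i)=0$, and hence $\eta(\varphi_i)=0$ by hypothesis, routes the AGM definition into the correct branch and yields an aggregate whose sign still matches that of the traditional $\min$ or $\max$. Concretely, a zero inside a conjunction forces $\min_i\rho(\varphi_i)\leq 0$ and triggers the \eqref{eq:agm-unsat} branch, where the zero contributes $[\eta(\varphi_i)]_-=0$ while any genuinely negative sibling still produces a matching negative aggregate, and the analogous check must be made for $\lor$, $\mathbf{G}$, and $\mathbf{F}$. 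Once these boundary situations are handled, the remainder reduces to routine sign-preservation observations about products of factors greater than $1$ and averages of same-signed terms.
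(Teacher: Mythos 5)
Your route is genuinely different from the paper's: you first try to establish the two-sided sign equivalences between $\eta$ and $\rho$ by induction, and then import the semantic conclusions from Theorem~1, whereas the paper's proof never relates $\eta$ to $\rho$ at all --- it proves $\eta(\varphi,S,t)>0 \Rightarrow S\models\varphi$ and $\eta(\varphi,S,t)<0 \Rightarrow S\not\models\varphi$ directly by structural induction, with contradiction arguments about which branch of \eqref{eq:agm-sat} versus \eqref{eq:agm-unsat} the definition selects. That difference matters, because the place you flagged as \enquote{careful bookkeeping of the zero case} is exactly where your argument fails for $\lor$ and $\mathbf{F}$.

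Concretely, take $\phi=\varphi_1\lor\varphi_2$ with $\eta(\varphi_1,S,t)=0$ and $\eta(\varphi_2,S,t)=-1/2$ (so $\rho(\varphi_1,S,t)=0$, $\rho(\varphi_2,S,t)=-1$). The satisfying branch of \eqref{eq:agm-sat} requires some $\eta(\varphi_i,S,t)>0$ \emph{strictly}, so the definition routes to \eqref{eq:agm-unsat}, giving $\eta(\phi,S,t)=1-\sqrt{(1-0)(1+1/2)}=1-\sqrt{3/2}<0$, while $\rho(\phi,S,t)=\max(0,-1)=0$. Hence $\eta<0$ does not imply $\rho<0$: your assertion that the unsat-branch value has \enquote{sign matching $\max_i\rho(\varphi_i)$} is false on the boundary, and the biconditional you set out to prove is not actually true there (negation then propagates the failure to the $>0$ direction on composite formulae such as $\lnot(\varphi_1\lor\varphi_2)$). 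The asymmetry with the conjunction case --- which you did check correctly --- is that a zero in the $\land$/$\mathbf{G}$ unsat branch contributes an additive term $[\eta]_-=0$ to an arithmetic mean, whereas a zero in the $\lor$/$\mathbf{F}$ unsat branch contributes a multiplicative factor $1$ to the geometric mean, which any strictly negative sibling then pushes below zero. Your induction is sound away from the boundary, i.e., under the extra hypothesis that no subformula has robustness exactly zero at any relevant time; either add that hypothesis explicitly, or follow the paper and induct on satisfaction directly, using only branch-membership and the induction hypothesis, bypassing $\rho$ altogether. (To be fair, the paper's own disjunction-violation case quietly brushes against the same boundary issue --- a disjunct with $\eta=0$ is inconclusive, not violated --- which is presumably why the paper designates $\eta=0$ as inconclusive; but its proof at least never relies on the false equivalence with $\rho$ that your plan makes load-bearing.)
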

\begin{proof}
We prove the property by structural induction over the formula $\varphi$.
The {\em base case} corresponding to $\varphi \in \{\top, \bot, \mu\}$
is trivially true by definition from~\eqref{eq:agm-def-base-cases}.
\\
Let S be a signal. We have the following {\em induction cases}:
\\
{\em Negation:} Let $\phi=\lnot \varphi$ and $\eta(\phi, S, t) > 0$.
We have $\eta(\varphi, S, t) < 0$, and by the induction hypothesis
$S\not\models\varphi$. Thus, $S \models \phi$.
Similarly, for $\eta(\phi, S, t) < 0$ we get $S\nmodels\phi$.
\\
{\em Conjunction:} Let $\phi=\varphi_1 \land \varphi_2$ and $\eta(\phi, S, t) > 0$.
Assume that one or both $\eta(\varphi_i, S, t) < 0$, $i=1, 2$,
then from~\eqref{eq:agm-unsat} we get $\eta(\phi, S, t) = \frac{1}{2}\sum\limits_{i=1,2} [\eta(\varphi_i, S, t)]_- < 0$
which contradicts the assumption. 
It follows that $\eta(\varphi_i, S, t) > 0$, $i=1, 2$.
By the induction hypothesis $S\models \varphi_i$, $i=1, 2$, and thus
$S\models \phi$.
For the case $\eta(\phi, S, t) < 0$, assume $\eta(\varphi_i, S, t) > 0$, $i=1, 2$.
From~\eqref{eq:agm-sat} it follows that $\eta(\phi, S, t) = \sqrt[2]{\prod\limits_{i = 1,2} \left( {1 + \eta ({\varphi _i},S,t)} \right)} - 1 > 0$ which is a contradiction.
Thus, we have either $\eta(\varphi_1, S, t) < 0$ or $\eta(\varphi_2, S, t) < 0$ or both.
Again by the induction hypothesis $S\not\models \varphi_1$ or $S\not\models \varphi_2$, and thus $S\not\models\phi$.\\
{\em Disjunction:} Follows similarly to {\em conjunction} case.
\\
{\em Globally:} Let $\phi = \mathbf{G}_{[a, b]} \varphi$, and $\eta(\phi, S, t) > 0$.
Assume that there is $t_k' \in [t+a, t+b]$ such that $\eta(\varphi, S, t_k') < 0$,
then from~\eqref{eq:agm-unsat} we get $\eta(\phi, S, t) = \frac{1}{N}\sum\limits_{t_k'\in [t+a,t+b]} [\eta(\varphi, S, t_k')]_- < 0$ which contradicts $\eta(\phi, S, t) > 0$.
It follows that $\eta(\varphi, S, t_k') > 0$, $\forall t_k' \in [t+a,t+b]$.
By the induction hypothesis $S[t_k']\models \varphi$, $\forall t_k' \in [t+a,t+b]$,
and thus $S\models \phi$.
For the case $\eta(\phi, S, t) < 0$, assume that for all 
$t_k'\in [t+a,t+b]$, $\eta(\varphi, S, t_k') > 0$.
From~\eqref{eq:agm-sat} we have $\eta(\phi, S, t) = \sqrt[N]{\prod\limits_{t_k' \in [t+a,t+b] } \left( 1 + \eta (\varphi, S, t_k') \right)} - 1 > 0$ which is a contradiction.
Thus, we have $\eta(\varphi, S, t_k') < 0$ for some $t_k'\in [t+a,t+b]$.
Again by the induction hypothesis $S[t_k'] \not\models\varphi$, and thus $S\not\models\phi$.
\\
{\em Eventually:} Follows similarly to the {\em globally} case.
\end{proof}\vspace{2pt}
\begin{proposition}
Let $S$ be a signal and $\phi$ a STL formula.
If $\eta(\phi, S, t) = 1$, then $\eta(\varphi, S, t_k) = 1$ for all
subformulae $\varphi$ of $\phi$ and
appropriate times $t_k$ as given by~\eqref{eq:agm-sat},~\eqref{eq:agm-unsat}.
Similarly, if $\eta(\phi, S, t) = 0$, then $\eta(\varphi, S, t_k) = 0$ and if $\eta(\phi, S, t) = -1$, then $\eta(\varphi, S, t_k) = -1$ for all
subformulae $\varphi$ of $\phi$ and
appropriate times $t_k$ in~\eqref{eq:agm-sat},~\eqref{eq:agm-unsat}.
\end{proposition}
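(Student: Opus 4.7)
The plan is to proceed by structural induction on $\phi$, treating the three extreme values $\eta \in \{-1, 0, 1\}$ simultaneously so that the negation case can appeal to the induction hypothesis on the reflected value. The base cases $\phi \in \{\top, \bot, \mu\}$ hold immediately from~\eqref{eq:agm-def-base-cases} since these formulae have no proper subformulae.

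For the inductive step, I would first apply Theorem~\ref{th:soundness} to determine, from the sign of $\eta(\phi, S, t)$, which branch of~\eqref{eq:agm-sat} or~\eqref{eq:agm-unsat} produced the score. For $\phi = \varphi_1 \wedge \dots \wedge \varphi_m$ with $\eta(\phi, S, t) = 1$, soundness forces the first branch of~\eqref{eq:agm-sat}, yielding $\prod_i (1 + \eta(\varphi_i, S, t)) = 2^m$; since each $\eta(\varphi_i, S, t) \in (0, 1]$ contributes a factor in $(1, 2]$, equality requires every factor to saturate at $2$, so every $\eta(\varphi_i, S, t) = 1$, and induction propagates the conclusion. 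The disjunction case at $\eta(\phi) = 1$ is analogous: the arithmetic mean of clipped values $[\eta(\varphi_i, S, t)]_+ \in [0, 1]$ equals $1$ iff each summand is $1$. Temporal operators $\mathbf{G}$ and $\mathbf{F}$ mirror $\wedge$ and $\vee$, ranging over the time points $t_k' \in [t+a, t+b]$.

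The $\eta(\phi, S, t) = -1$ case is symmetric: soundness routes us into~\eqref{eq:agm-unsat}, and the same extremality arguments---now applied to $\prod (1 - \eta(\varphi_i, S, t))$ for disjunction and $\mathbf{F}$, and to the average of $[\eta(\varphi_i, S, t)]_-$ for conjunction and $\mathbf{G}$---force each relevant subformula to saturate at $-1$. Negation is dispatched by the identity $\eta(\lnot\varphi, S, t) = c$ iff $\eta(\varphi, S, t) = -c$ together with the induction hypothesis for the opposite extreme.

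The main obstacle will be the $\eta(\phi, S, t) = 0$ case, especially for conjunction and $\mathbf{G}$. The satisfying branch of~\eqref{eq:agm-sat} is ruled out because a geometric mean of quantities strictly greater than $1$ cannot equal $1$; hence~\eqref{eq:agm-unsat} applies, and $\frac{1}{m}\sum_i [\eta(\varphi_i, S, t)]_- = 0$ forces $\eta(\varphi_i, S, t) \ge 0$ for every $i$, while the branch's precondition requires at least one index with $\eta(\varphi_i, S, t) \le 0$, pinning that value at $0$. Here the phrase \emph{appropriate subformulae} must be read as those actually entering the recursive call---the ones with $\eta \le 0$ in the violation branch---so that the value $0$ is propagated only along those recursive paths. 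The disjunction, $\mathbf{F}$, and dual cases admit the symmetric analysis using~\eqref{eq:agm-sat}.
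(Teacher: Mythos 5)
Your proof is correct and takes essentially the same route the paper intends: the paper's own proof is the single line \enquote{similar to Theorem~\ref{th:soundness}}, i.e., the structural induction over $\phi$ using the branch dichotomy of~\eqref{eq:agm-sat}/\eqref{eq:agm-unsat} and extremality of the arithmetic/geometric means, which you carry out in full. Your two refinements---that the $\eta = 0$ case for $\wedge$ and $\mathbf{G}$ pins only the subformulae and time points actually entering the violation branch (the clipped positive terms are unconstrained, as $\eta(\varphi_1)=0$, $\eta(\varphi_2)=0.5$ shows), and the sign-reflection bookkeeping at negation via simultaneous induction on $\{-1,0,1\}$---are exactly the caveats needed to make the statement literally true, which the paper's one-line proof glosses over.
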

\begin{proof}
The proof is similar to \textit{Theorem~\ref{th:soundness}}.
\end{proof}
\subsection{Logic properties}
Let $\conj: [-1, 1] \times [-1, 1] \to [-1, 1]$ be a conjunction function
defined such that $\conj(\eta(\varphi_1, 
S), \eta(\varphi_2, S)) = \eta(\varphi_1 \land \varphi_2, S)$ for all STL formulae $\varphi_1, \varphi_2$ and signal $S$.
Explicitly,
\begin{equation}
\conj(x, y) = \begin{cases}
\sqrt{(1+x)(1+y)} -1 & x> 0, y> 0\\
\frac{[x]_-+[y]_-}{2} &  \text{else}
\end{cases}
\end{equation}
\begin{proposition}
\label{th:conj-lprop}
The conjunction function satisfies:
\begin{align}
\conj(x, y) = \conj(y, x) & \quad \text{(Commutativity)}\\
\conj(x, y) \leq \conj(u, v), \forall x\leq u, y\leq v & \quad \text{(Monotonicity)}\\
\conj(x, x) = x & \quad \text{(Idempotence)}
\end{align}
\end{proposition}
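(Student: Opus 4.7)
The plan is to prove the three properties by direct inspection for commutativity and idempotence, and by case analysis on the signs of the arguments for monotonicity.

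For commutativity, I would simply observe that both pieces of the definition are symmetric: $\sqrt{(1+x)(1+y)}-1$ is manifestly invariant under swapping $x$ and $y$, and the condition $x>0, y>0$ picking out this branch is itself symmetric; similarly, the fallback $\frac{[x]_-+[y]_-}{2}$ and its (negated) branch condition are symmetric. For idempotence, I would compute both cases directly: when $x>0$, $\conj(x,x)=\sqrt{(1+x)^2}-1=x$ since $1+x>0$; when $x\leq 0$, $\conj(x,x)=\frac{[x]_-+[x]_-}{2}=[x]_-=x$ because $x\leq 0$ forces $[x]_-=x$.

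For monotonicity, I would fix $x\leq u$ and $y\leq v$ and split into cases according to which branch $(x,y)$ and $(u,v)$ fall into. The four logically possible combinations reduce to three, because $x\leq u$ and $y\leq v$ rule out the case where $(x,y)$ is in the positive branch while $(u,v)$ is not. In the case where both pairs are in the positive branch, monotonicity follows because $t\mapsto \sqrt{t}-1$ is increasing and $(1+x)(1+y)\leq (1+u)(1+v)$ factorwise. In the case where both pairs fall into the ``else'' branch, I would verify the elementary fact that $a\mapsto [a]_-$ is monotone non-decreasing (by considering the three sub-cases determined by where $x$ and $u$ lie relative to $0$), so that $[x]_-\leq [u]_-$ and $[y]_-\leq [v]_-$ and the averages inherit the inequality. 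In the mixed case where $(x,y)$ is in the ``else'' branch but $(u,v)$ is in the positive branch, the left-hand side is $\leq 0$ while the right-hand side is $>0$, so the inequality is automatic.

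The main obstacle is the mixed case in monotonicity, but it trivializes once one notices that the two branches are separated by the sign of their output: the positive branch always produces values strictly greater than $0$ (since $x,y>0$ implies $\sqrt{(1+x)(1+y)}>1$), while the ``else'' branch always produces values $\leq 0$. This sign separation is what makes the piecewise definition compatible with monotonicity across the boundary, and I would highlight it as the key structural observation before proceeding to the routine case checks.
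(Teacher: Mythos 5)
Your proof is correct, and it is worth noting that the paper itself states this proposition without any proof at all (in the spirit of its later remark that such algebraic properties \enquote{follow directly from the definitions}), so your writeup supplies precisely the routine verification the authors leave implicit. All the steps check out: commutativity is immediate from the symmetry of both branches and of the branch condition; idempotence is a direct computation in both cases, including the observation that $x \le 0$ gives $[x]_- = x$ (true both for $x<0$ and for $x=0$); and for monotonicity your reduction of the four branch combinations to three is valid, since $x \le u$, $y \le v$ with $x,y>0$ forces $u,v>0$, ruling out the case where $(x,y)$ is positive-branch but $(u,v)$ is not. The remaining three cases are handled correctly: factorwise ordering of $(1+x)(1+y) \le (1+u)(1+v)$ plus monotonicity of $t \mapsto \sqrt{t}-1$ in the doubly-positive case, monotonicity of $a \mapsto [a]_-$ in the doubly-negative case, and the sign separation $\conj(x,y) \le 0 < \conj(u,v)$ in the mixed case. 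Your closing structural observation --- that the two branches of $\conj$ are separated by the sign of their output, which is what makes the piecewise definition compatible with monotonicity across the branch boundary --- is indeed the one non-obvious point, and it is the same mechanism that underlies the paper's soundness theorem (the geometric-mean branch fires exactly when all conjuncts are strictly positive, and then its value is strictly positive).
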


Similarly, we define the disjunction function $\disj(\cdot, \cdot)$ which also satisfies the same properties in \textit{Proposition \ref{th:conj-lprop}}.
\begin{remark}
A weaker form of absorption with respect to maximum true and minimum false hold
for conjunction $\conj(x, -1) < 0$ and disjunction $\disj(x, 1) > 0$
for all $x \in (-1, 1)$, respectively.
\end{remark}

Let $\nfnc: [-1, 1] \to [-1, 1]$ be the negation function defined such that
$\nfnc(\eta(\varphi, S)) = \eta(\lnot\varphi, S)$ for all STL formula $\varphi$ and signal $S$. Explicitly, $\nfnc(x) = -x$.

Lastly, we define the implication function $\impl:[-1, 1] \times [-1, 1] \to [-1, 1]$
as $\impl(x, y) = \disj(-x, y)$.
\begin{theorem}[Rules of Inference]
The following hold:
\begin{enumerate}
\item {\em Law of non-contradiction:} $\conj(x, \nfnc(x)) < 0$, $\forall x \neq 0$;
\item {\em Law of excluded middle:} $\disj(x, \nfnc(x)) > 0$, $\forall x \neq 0$;
\item {\em DeMorgan's law:} $\disj(x, y) = \nfnc(\conj(\nfnc(x), \nfnc(y)))$, $\forall x, y$;
\item {\em Double negation:} $\nfnc(\nfnc(x)) = x$, $\forall x$;
\item {\em Modus ponens:} if $\impl(x, y) > 0$ and $x > 0$ then $y > 0$.
\end{enumerate}
\end{theorem}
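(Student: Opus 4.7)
The plan is to prove each of the five items by direct calculation from the explicit formulas for $\conj$, $\disj$, and $\nfnc$, splitting into sign cases for the arguments. Items (1), (2), and (4) will reduce to one-line calculations; (5) will follow from Theorem~\ref{th:soundness}; only DeMorgan's law (3) requires genuine case analysis.

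For the law of non-contradiction (1), I would unfold $\nfnc(x) = -x$ so that the claim becomes $\conj(x, -x) < 0$ for $x \neq 0$. Since exactly one of $x, -x$ is strictly positive and the other strictly negative, the evaluation falls into the second branch of $\conj$, yielding $\tfrac{[x]_- + [-x]_-}{2} = -\tfrac{|x|}{2} < 0$. The law of excluded middle (2) is symmetric: $\disj(x, -x)$ falls in the `positive' branch of $\disj$, giving $\tfrac{[x]_+ + [-x]_+}{2} = \tfrac{|x|}{2} > 0$. Double negation (4) is immediate from $\nfnc(\nfnc(x)) = -(-x) = x$. For modus ponens (5), unfolding $\impl(x, y) = \disj(-x, y) > 0$ I would invoke soundness of $\disj$ (proved as part of Theorem~\ref{th:soundness}) to conclude that at least one of the two arguments must be strictly positive; the hypothesis $x > 0$ excludes $-x > 0$, so we must have $y > 0$.

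The hard part will be DeMorgan's law (3). I would split on the signs of $(x, y)$ and verify the identity $\disj(x, y) = -\conj(-x, -y)$ in each case. When both $x, y > 0$, the left side uses the `positive' branch of $\disj$ giving $\tfrac{x+y}{2}$; on the right, $-x, -y$ are both non-positive so $\conj(-x, -y)$ uses the `else' branch giving $\tfrac{[-x]_- + [-y]_-}{2} = -\tfrac{x+y}{2}$, and negation yields agreement. When both $x, y \le 0$ (with at least one strictly negative), the left side uses the `negative' branch $1 - \sqrt{(1-x)(1-y)}$, while $-x, -y$ are non-negative so $\conj(-x, -y) = \sqrt{(1-x)(1-y)} - 1$ (handling the zero-component boundary by direct substitution), which again matches after negation. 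The mixed sign cases reduce on both sides to arithmetic-mean expressions that match directly. The only delicate point is the boundary where a component equals zero, which must be checked separately to confirm the case splits of $\conj$ and $\disj$ produce consistent values; this bookkeeping is the main obstacle, but no new ideas are needed beyond careful substitution.
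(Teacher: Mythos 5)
Your treatment of items (1), (2), (4), and (5) is correct, and it is the same route as the paper's, whose entire proof is the single line \enquote{All properties follow directly from the definitions} --- any honest verification amounts to exactly the case-by-case computation you outline. One small remark on (5): invoking Theorem~\ref{th:soundness} is more machinery than needed (soundness is stated for formulae and signals, not for the function $\disj$ on arbitrary arguments in $[-1,1]$); the fact you want is a one-liner from the branch structure: if neither argument is strictly positive, then $\disj(a,b) = 1-\sqrt{(1-a)(1-b)} \le 0$, so $\impl(x,y)=\disj(-x,y)>0$ together with $-x<0$ forces $y>0$.

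The genuine gap is in your DeMorgan case analysis, at precisely the boundary you flag as \enquote{the only delicate point} and then dismiss. Your parenthetical claim that the zero-component boundary is \enquote{handled by direct substitution} is false --- direct substitution refutes it. Take $x=0$ and $y\in[-1,0)$. Neither argument is strictly positive, so the left-hand side uses the geometric branch of $\disj$:
\[
\disj(0,y) \;=\; 1-\sqrt{(1-0)(1-y)} \;=\; 1-\sqrt{1-y} \;<\; 0,
\]
whereas on the right-hand side $\nfnc(x)=0$ and $\nfnc(y)=-y>0$, and since the geometric branch of $\conj$ requires \emph{both} arguments strictly positive, $\conj(0,-y)$ falls into the arithmetic branch and equals $\tfrac{[0]_-+[-y]_-}{2}=0$; hence $\nfnc(\conj(\nfnc(x),\nfnc(y)))=0\neq \disj(0,y)$. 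So the identity $\conj(-x,-y)=\sqrt{(1-x)(1-y)}-1$ that you assert in the \enquote{both $x,y\le 0$} case is not what the definition gives when a component is exactly zero: one side reports a strict violation, the other reports $0$ (inconclusive). With the branch conditions read literally (strict inequalities in $\conj$, and $\disj$ derived from the AGM semantics of $\lor$), DeMorgan's law as stated \enquote{$\forall x,y$} fails exactly on the pairs where one argument is $0$ and the other strictly negative (and symmetrically for the mirrored configuration). To close the gap you must either restrict item (3) away from these boundary points, or adopt the convention that $\disj$ is \emph{defined} as $\nfnc(\conj(\nfnc(\cdot),\nfnc(\cdot)))$, making (3) true by fiat --- a subtlety that the paper's one-line proof silently glosses over as well, so your write-up should make the choice explicit rather than claim the substitution works.
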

\begin{proof}
All properties follow directly from the definitions.
\end{proof}
\begin{remark}
Although $([-1, 1], \conj, \disj)$ is not a distributive lattice,
i.e., Boolean algebra, it does satisfy the Kleene algebra condition:
$\conj(x, \nfnc(x)) \leq \disj(y, \nfnc(y))$, $\forall x, y \in [-1, 1]$.
\end{remark}
\subsection{Performance Properties}
\begin{properties}[Smoothness and Gradient]
The AGM robustness $\eta(\phi,S,t)$ is smooth in $S \in [-1, 1]^n$ almost everywhere
except on the satisfaction boundaries $\rho(\varphi,S,t_k)=0$,
where $\varphi$ is a subformula of $\phi$, and appropriate times $t_k$
as given in~\eqref{eq:agm-sat} and~\eqref{eq:agm-unsat}.
Moreover, the gradient of $\eta$ with respect to the elements of $S$ that are
part of $\phi$'s predicates is non-zero wherever it is smooth.
\end{properties}
\begin{proof}[Sketch]
The property follows by structural induction over the formula $\phi$,
and the smoothness and non-zero gradient of the conjunction $\conj$ and
disjunction $\disj$ functions on $\big((-1, 1)\setminus \{0\}\big)^2$,
and negation $\nfnc$ on $(-1, 1)$.
The cases for the {\em globally} and {\em eventually} operators follow similarly.
\end{proof}
\begin{algorithm}[!t]
\caption{\textsc{AGM Robustness for And}}
\label{alg: and}
\KwIn{STL Formula $\phi= \varphi_1 \wedge \varphi_2 \wedge ... \varphi_m $; Signal $S$}
\KwOut{AGM Robustness $\eta(\phi,S)$}
Find $\eta(\varphi_i,S)$ for $i=\{1,2,...,m\}$ using \eqref{eq:agm-def-base-cases}\;
If ${ANY}(\eta(\varphi_i,S)\le 0 )$, then $S  \nmodels \phi$, 
$\;\;\;\;\;\;\;\;\;\;\;\;\;\;\;\;\;\eta (\phi ,S|S \nmodels \phi) := \frac{1}{m} \sum\limits_{i=1,...,m} {[\eta(\varphi_i,S)]_-}$\;
Else: $S  \models \phi$, 
$\eta(\phi,S| S \models \phi):=\sqrt[m]{ {\prod\limits_{i = 1,...,m} {\left( {1 + \eta ({\varphi _i},S)} \right)} }} - 1$. 
\end{algorithm} 
\begin{algorithm}[!t]
\caption{\textsc{AGM Robustness for Or}}
\label{alg: or}
\KwIn{STL Formula $\phi= \varphi_1 \vee \varphi_2 \vee ... \varphi_m $; Signal $S$}
\KwOut{AGM Robustness $\eta(\phi,S)$}
Find $\eta(\varphi_i,S)$ for $i=\{1,2,...,m\}$ using \eqref{eq:agm-def-base-cases}\;
If ${ANY}(\eta(\varphi_i,S) > 0 )$, then $S  \models \phi$, 
$\;\;\;\;\;\;\;\;\;\;\;\;\;\;\;\;\;\eta (\phi ,S|S \models \phi) := \frac{1}{m} \sum\limits_{i=1,...,m} {[\eta(\varphi_i,S)]_+}$\;
Else: $S  \nmodels \phi$, 
$\eta(\phi,S|S \nmodels \phi)=-\sqrt[m]{{ {\prod\limits_{i = 1,...,m} {\left( {1 - \eta ({\varphi _i},S)} \right)}}}} + 1$.
\end{algorithm}
\begin{algorithm}[!t]
\caption{\textsc{AGM Robustness for Globally}}
\label{alg: G}
\KwIn{STL Formula $\phi=\mathbf{G}_{[a,b]} \varphi$; Signal $S$}
\KwOut{AGM Robustness $\eta(\phi,S)$}
Find $\eta(\varphi,S[t_k'])$ for time points $t_k' \in [a,b]$ using \eqref{eq:agm-def-base-cases}\;
If 
${ANY}(\eta(\varphi, S[t_k']) \le 0 $), then $S  \nmodels \phi$, $\;\;\;\;\;\;\;\;\;\;\;\;\;\;\;\;\;\eta (\phi ,S|S \nmodels \phi) := \frac{1}{N}\sum\limits_{t_k'\in [a,b]} [\eta(\varphi, S[ t_k'])]_-$\;
Else: $S  \models \phi$,
$\eta(\phi,S| S \models \phi):= \sqrt[N]{ {\prod\limits_{t_k' \in [a,b] } {\left( {1 + \eta (\varphi, S[ t_k'])} \right)} } } - 1.$
\end{algorithm}
\begin{algorithm}[!t]
\caption{\textsc{AGM Robustness for Eventually}}
\label{alg: F}
\KwIn{STL Formula $\phi=\mathbf{F}_{[a,b]}\varphi$; Signal $S$}
\KwOut{AGM Robustness $\eta(\phi,S)$}
Find $\eta(\varphi,S[t_k']$ for time points $t_k' \in [a,b]$ using \eqref{eq:agm-def-base-cases}\;
If ${ANY}(\eta(\varphi,S[t_k'])> 0 $), then $S \models \phi$, $\;\;\;\;\;\;\;\;\;\;\;\;\;\;\;\;\;\;\;\;\;\eta (\phi ,S|S \models \phi) := \frac{1}{N}\sum\limits_{t_k' \in[a,b] } [\eta(\varphi, S[t_k'])]_+ $ \;
Else: $S  \nmodels \phi$,
$\eta (\phi ,S|S \nmodels \phi) :=-\sqrt[N]{ {\prod\limits_{t_k' \in [a,b] } {\left( {1 - \eta (\varphi, S[t_k'])} \right)}}} + 1.$
\end{algorithm} \vspace{3pt}
\begin{properties}[Arithmetic and Geometric Means]
By employing arithmetic and geometric means for defining the AGM robustness, we can measure how well a specification $\phi$ is satisfied, taking in to account robustness for all the subformulae $\varphi$ of $\phi$ or at all appropriate times $t_k$ and not just the most critical one with maximum/minimum satisfaction. Comparison between traditional and AGM robustness scores demonstrates the advantage of our average-based definition. For instance, consider a signal $S \in [0,1]$ and three subformulae $\varphi_1,\varphi_2,\varphi_3$ with $\rho(\varphi_1,S)=\rho(\varphi_2,S)=\eta(\varphi_1,S)=\eta(\varphi_2,S)=1$ and $\rho(\varphi_3,S)=\eta(\varphi_3,S)=0.2$. While traditional robustness uses $\max$ function and returns $\rho(\varphi_1 \vee \varphi_2,S)=\rho(\varphi_1 \vee \varphi_3,S)=1$; AGM definition returns $\eta(\varphi_1 \vee \varphi_3,S)=0.6$, which is positive showing that the specification is satisfied, but the robustness is less than $1$ (highest satisfaction), which is attainable only when both subformulae are maximally satisfied, i.e., $\eta(\varphi_1 \vee \varphi_2,S)=1$.
We now assume $\rho(\varphi_1,S)=\rho(\varphi_2,S)=\eta(\varphi_1,S)=\eta(\varphi_2,S)=0.2$ and $\rho(\varphi_3,S)=\eta(\varphi_3,S)=1$. While traditional robustness uses $\min$ function and returns $\rho(\varphi_1 \wedge \varphi_2,S)=\rho(\varphi_1 \wedge \varphi_3,S)=0.2$; AGM definition returns $\eta(\varphi_1 \wedge \varphi_2,S)=0.2$, which is positive showing the specification is satisfied, but the robustness is less than $\eta(\varphi_1 \wedge\varphi_3,S)=0.55$, which shows a stronger satisfaction. Now consider three signals $S_1,S_2,S_3$ illustrated in Fig. 1. We first examine traditional and AGM robustness score for $\phi_1=\mathbf{F}_{[1,4]}(S > 0.5)$. Using $\max$ function in the traditional definition, $\rho (\phi_1,S_i)=0.5$ for $i=1,2,3$ in Fig. 1 (Left). However, AGM robustness takes a time average over the formula horizon considering all the times the predicate is satisfied; therefore, it returns higher robustness $\eta(\phi_1,S_1)=0.5$ for $S_1$ and lower robustness $\eta(\phi_1,S_2)=0.25$ and $\eta(\phi_1,S_3)=0.125$ for $S_2$,$S_3$, respectively. Basically, AGM robustness for $F_{[a,b]}\varphi$ can be interpreted as \enquote{\textit{eventually satisfy $\varphi$ with the maximum possible satisfaction as early as possible and for as long as possible}}. For signals in Fig. 1 (Right) and $\phi_2=\mathbf{G}_{[0,4]} (S > 0.5)$, traditional robustness with $\min$ function returns $\rho(\phi_2,S_1)=0.5$ for $S_1$, while giving same robustness $\rho (\phi_2,S_i)=0.1$ for $S_2,S_3$. On the other hand, AGM definition calculates $\eta (\phi_2,S_1)=0.5$ for $S_1$, and lower robustness scores $\eta(\phi_2,S_2)=0.41$ for $S_2$ and $\eta (\phi_2,S_3)=0.1$ for $S_3$. Thus, the AGM definition for $G_{[a,b]}\varphi$ can be interpreted as \enquote{\textit{always satisfy $\varphi$ with the maximum possible satisfaction for all the time points in $[a,b]$}}.
\begin{figure}[htb]
\centering
\begin{tabular}{cc}
%
%
\definecolor{mycolor1}{rgb}{0.00000,0.44700,0.74100}%

\definecolor{mycolor3}{rgb}{0.85000,0.32500,0.09800}%

\definecolor{mycolor5}{rgb}{0.00000,0.74902,0.74902}%
\pgfplotsset{tick label style={font=\tiny}}
\begin{tikzpicture}

\begin{axis}[%
width=2.9cm,
height=2.3cm,
at={(0cm,0cm)},
scale only axis,
xmin=0,
xmax=4,
xtick={0, 1, 2, 3, 4},
xlabel={\tiny time},
ymin=0,
ymax=1.2,
ytick={  0, 0.5,0.9, 1, 1.4},
axis background/.style={fill=white},
xmajorgrids,
ymajorgrids,
legend style={fill=none,draw=none, font=\tiny  ,at={(1.07,0.4570806)},  align=right}
]
\addplot [color=mycolor1, dotted, line width=1.3pt, mark=*, mark options={solid, fill=mycolor1,mark size=1.5pt}]
  table[row sep=crcr]{%
0	0\\
1	1\\
2	1\\
3	1\\
4	1\\
};

\addplot [color=mycolor3, dashdotted, line width=1.0pt, mark=pentagon*, mark options={solid, fill=mycolor3,mark size=1.1pt}]
  table[row sep=crcr]{%
0	0\\
1	0.3\\
2	0.3\\
3	1\\
4	1\\
};

\addplot [color=mycolor5, dashed, line width=1.0pt, mark=square*, mark options={solid, fill= mycolor5,mark size=.9pt}]
  table[row sep=crcr]{%
0	0\\
1	1\\
2	0.3\\
3	0.3\\
4	0.3\\
};

\end{axis}
\end{tikzpicture}
%
%
\definecolor{mycolor1}{rgb}{0.00000,0.44700,0.74100}%

\definecolor{mycolor3}{rgb}{0.85000,0.32500,0.09800}%

\definecolor{mycolor5}{rgb}{0.00000,0.74902,0.74902}%
\pgfplotsset{tick label style={font=\tiny}}
\begin{tikzpicture}

\begin{axis}[%
width=2.9cm,
height=2.3cm,
at={(0cm,0cm)},
scale only axis,
xmin=0,
xmax=4,
xtick={0, 1, 2, 3, 4},
xlabel={\tiny time},
ymin=0,
ymax=1.2,
ytick={  0, 0.5, 0.6, 1},
axis background/.style={fill=white},
xmajorgrids,
ymajorgrids,
legend style={fill=none,draw=none,font=\tiny, at={(0.4950811,0.46806)}, align=left}
]
\addplot [color=mycolor1, dotted, line width=1.3pt, mark=*, mark options={solid, fill=mycolor1,mark size=1.5pt}]
  table[row sep=crcr]{%
0	1\\
1	1\\
2	1\\
3	1\\
4	1\\
};
\addlegendentry{$S_1$}

\addplot [color=mycolor3, dashdotted, line width=1.0pt, mark=pentagon*, mark options={solid, fill=mycolor3,mark size=1.1pt}]
  table[row sep=crcr]{%
0	0.6\\
1	1\\
2	1\\
3	1\\
4	1\\
};
\addlegendentry{$S_2$}

\addplot [color=mycolor5, dashed, line width=1.0pt, mark=square*, mark options={solid, fill= mycolor5,mark size=0.9pt}]
  table[row sep=crcr]{%
0	0.6\\
1	0.6\\
2	0.6\\
3	0.6\\
4	0.6\\
};
\addlegendentry{$S_3$}

\end{axis}
\end{tikzpicture}%
\end{tabular}
\caption{Signals for traditional and AGM robustness comparison.}
\end{figure}
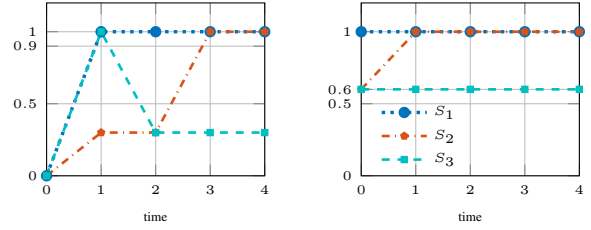 
\end{properties}
\begin{properties}[Performance Under Disturbance]
The AGM robustness score provides a better satisfaction margin in the presence of disturbance. Consider the specification $\phi_3= F_{[1,4]}(S>0.9)$ and signals $S_1,S_3$ in Fig. 1 (Left), satisfying $\phi_3$ with the same traditional and AGM robustness score $\rho(\phi_3,S_3)=\eta(\phi_3,S_1)=0.1$. In the traditional robustness definition, $S_3$ only satisfies $\phi_3$ at a single time point, i.e., at $t=1$ with $\rho(\phi_3,S_3[1])=0.1$; while for $S_1$ to have the same score using AGM robustness, $\eta(\phi_3,S_1[t_k])=0.1$ for all $t_k \in [1,4]$. It can be easily shown that applying any disturbance $d > 0.1$ at $t=1$ to $S_3$ results in violation of $\phi_3$. However, $\phi_3$ is still satisfied in $S_1$ under the same disturbance $d$, although the satisfaction would become weaker. Therefore, at a same score for the traditional and AGM robustness, satisfaction would hold for larger disturbance using the AGM definition. 
\end{properties}
\subsection{Normalization}
The normalization with respect to the range of the elements of $S$ is not restrictive but is desired to provide a meaningful understanding about satisfaction or violation of a specification, especially when comparing robustness in a formula with predicates defined over different properties or scales. For instance, consider the following specification:\vspace{-1pt}  
\[
  \label{spec}
 \begin{array} {l}

\varphi_1= x_{robot} > 5,\\
\varphi_2=\textit{Battery}> 30,\\
\phi= \varphi_1 \wedge \varphi_2,
 \end{array}
\]
where $x_{robot} \in [0,10]$ is the position of the robot and $\textit{Battery} \in [0,100]$ shows its battery level. Without normalization, at $x_{robot}=6,\; \textit{Battery} = 80$, robustness $\rho(\varphi_1,x_{robot})=1$ and $\rho(\varphi_2,\textit{Battery})=50$. Since the variables are in different scales, unnormalized robustness score is not a meaningful measure of how well the specification $\phi$ is satisfied, i.e., we have $\rho(\phi,(x_{robot},\textit{Battery}))= 1$ for $x_{robot}=6$, and any $\textit{Battery} = \{31,32,...,100\}$. Therefore, not only normalization is not limiting, but is actually essential in practice.

\section{CONTROL USING THE AGM ROBUSTNESS}
To solve the control synthesis problem \eqref{eq: cost}, we need to find optimal trajectories which satisfy the specification $\phi$. A positive robustness score provides a margin in which any perturbation up to $\eta$ does not change satisfaction of the specification. Therefore, we can maximize robustness over all possible control inputs to find not just a satisfying trajectory, but one that has the strongest satisfaction of the specification:
\begin{equation}
\label{eq: rob}
\begin{array}{c}
u^*=\argmax_u \eta(\phi,\langle q,u\rangle)\\
\; \text{s.t.}\;\;\;\;\; \eta(\phi,\langle q,u\rangle)>0.
\end{array}
\end{equation}
Assume the system dynamics $f$ in \eqref{eq:dynamics} is smooth. Based on \textit{Property 1}, we can use advanced optimization methods such as gradient ascent to maximize the AGM robustness $\eta$, rather than using heuristic methods or MILP encoding. Gradient ascent is an iterative optimization algorithm for finding maximum of a function $F(x)$ by taking steps proportional to gradient of the function at each iteration $i$: 
\begin{equation}
\label{eq: GA}
x^{i+1}\gets x^{i}+\alpha^i\; \nabla F,
\end{equation}
where $\nabla F = \frac{\partial F}{\partial x}$ and $\alpha^i$ is step size at iteration $i$. Despite heuristic optimization algorithms which have so many parameters to be set, gradient methods only need to tune the step size $\alpha$. Due to non-smoothness in $\eta$ at the satisfaction boundaries, we use proximal stochastic gradient ascent or sub-gradient ascent method with diminishing step size \cite{GAbook}. To initialize gradient ascent, a random control input sequence $u^0 \in \mathbf{U}$ is generated, and the resulting trajectory starting from initial state $q_0$ is found using system dynamics, which may violate the state constraints or STL specification. The gradient ascent optimization then finds optimal control policy $u^*$ which maximizes AGM robustness function $\eta$ for given STL constraints $\phi$ with respect to the system execution $\langle q,u\rangle$. \\
Combining \eqref{eq: rob} and \eqref{eq: cost}, we can solve a relaxed problem in which we maximize the robustness as much as possible as well as minimizing the penalized cost. The combined fitness function is defined as:
\begin{equation}
\label{eq: opt}
\begin{array}{c}
u^*={\argmax}_u (\eta(\phi,\langle q,u\rangle)-\lambda \sum\limits_{k=0}^{T-1} J(u[k],q[k+1])),\\
\text{s.t.}\;\;\;\; \eta(\phi,\langle q,u\rangle)>0,\\
\;\;\;\;\;\;\;\;\;\;\;\;\;\;q[k+1]=f(q[k],u[k]),\\
q[0]=q_0,\\
q[k] \in \mathbf{Q} \subseteq\mathbb{R}^n,\\
u[k] \in \mathbf{U} \subseteq\mathbb{R}^m,
\end{array}
\end{equation}
where $\lambda$ penalizes the trade-off between maximizing robustness to get the highest STL satisfaction and minimizing the associated cost. Assuming the cost function $J$ is also smooth, a similar gradient ascent optimization can be used to solve the constrained nonlinear optimization problem \eqref{eq: opt}.
\section{CASE STUDIES}
\label{case}
In this section, we show the applicability and efficacy of our framework for control synthesis problems in both linear and nonlinear systems with and without external disturbance, and compare our results with the MILP approach for traditional robustness and SQP approach for the approximation robustness. To emphasize the differences between the proposed robustness and the traditional and approximation ones, we set $\lambda=0$ in \eqref{eq: opt}. Gradient ascent simulations are coded in MATLAB and MILP is implemented in the Gurobi package in Python. 
The maximum number of iterations for gradient ascent is set to $300$. 

\subsection{AGM Robustness Versus Traditional Robustness}
\begin{problem}
Consider a nonholonomic dynamical system:
\begin{equation}
\label{eq:w}
\begin{array}{l}
x[k+1]=x[k]+\cos\theta[k]v[k],\\
y[k+1]=y[k]+\sin\theta[k]v[k],\\
\theta[k+1]=\theta[k]+w[k],\\
\end{array}
\end{equation}
and the desired task \enquote{\textit{Always} stay in the \textit{Init} for $5$ steps $and$ \textit{eventually} visit \textit{Reg1} between $[6,10]$ steps $and$ \textit{eventually} visit \textit{Reg2} between $[11,15]$ steps $and$ \textit{Always} avoid \textit{Obs}}, formally specified as STL formula:
\begin{equation}
\begin{array}{l}
\label{eq:ex1}
\phi_1=(\mathbf{G}_{[1,5]}
\;\textit{Init}) \wedge \;(\mathbf{F}_{[6,10]}\;\textit{Reg1})\\\;\;\;\;\;\;\;\; \wedge\;(\mathbf{F}_{[11,15]} \;\textit{Reg2}) \;\wedge (\mathbf{G}_{[0,15]}\;\neg\textit{Obs}),
\end{array}
\end{equation}
where $\textit{Obs}=[4,7] \times [4,8]$ is the obstacle to avoid, $\textit{Reg1}=[5,7] \times [0,3]$ and $\textit{Reg2}=[8,10] \times [4,6]$ are regions to be sequentially visited, and $\textit{Init} =[0,3] \times [3,7]$ is the region containing the initial position. The state vector $q=[x,y,\theta]$ indicates the robot position and orientation with $\mathbf{Q}=[0,10]^2 \times[-2\pi,2\pi]$, initial state is $q_0=[0.5,5,0]$ and $u=[v,w]$ is the input vector with $\mathbf{U}=[-1.3,1.3]^2$.
\end{problem}

To maximize the traditional robustness $\rho$ using the MILP implementation, we need to linearize the dynamics. We use feedback linearization to convert the nonlinear dynamics \eqref{eq:w} in to a discrete double integrator dynamics \cite{milp}:
\begin{equation}
\label{eq:feedLin}
\begin{array}{l}
q'[k+1]=q'[k]+q'_{d}[k],\\
q'_{d}[k+1]=q'_{d}[k]+u_{q'}[k],
\end{array}
\end{equation}
with $q'=[x,y]$ being the new state vector, $q'_d$ the first order discrete derivative, and $u_{q'}=[u_x,u_y]$ the new control inputs for the linearized system that we synthesize. Therefore, by linearizing dynamics, we can only control $x$ and $y$ directly and robot orientation $\theta$ is controlled indirectly. Two optimal trajectories maximizing traditional robustness for the linearized system \eqref{eq:feedLin} 
found by Gurobi with same maximum traditional robustness $\rho=1$ are shown in Fig. \ref{fig: milp}. The MILP implementation for STL constraints in $\phi_1$ with time horizon $T=15$ has $95$ continuous and $70$ integer (binary) variables. It is shown in \cite{husam} that MILP does not scale well with the number of integer variables. Therefore, MILP is not applicable for complex specifications with many $\lor$ and $\mathbf{F}$ operators (that must be encoded as binary variables) and long time horizons. 

We next maximize AGM robustness $\eta$ for the nonlinear dynamics \eqref{eq:w} using gradient ascent. Fig. \ref{fig:nw} shows two trajectories satisfying STL constraints in $\phi_1$ obtained in different iterations of gradient ascent. Although both methods generate satisfying trajectories, our proposed approach generates a more smooth trajectory by controlling both robot position and orientation. Moreover, maximum traditional robustness using MILP is obtained when trajectory visits each region with maximum robustness at a single time point ($Reg1$ at $t=10$, $Reg2$ at $t=15$) without rewarding the frequency of satisfaction while using the AGM robustness, trajectory with higher robustness visits $Reg1$ \textit{as early as possible and for as long as possible} ($t=9,10$); with all subformuale having \textit{maximum possible robustness} (trajectory is toward the center of regions) while always avoiding obstacle. 

\subsection{AGM Robustness Versus Approximation Robustness}
In \cite{husam} authors used Sequential Quadratic Programming (SQP) on the smooth approximation robustness $\tilde \rho$ of MTL specifications and showed it was more time efficient than MILP approach. However, smooth approximation was within a pre-defined error $\delta$ of the traditional robustness, i.e., $|\rho -\tilde \rho| \le \delta$. As a result, a positive approximation robustness $\tilde \rho$ did not necessarily correspond to a trajectory satisfying the specification and it was required to add $\tilde \rho \ge \delta$ as a constraint in the optimization problem. We compare the results for maximizing approximation robustness $\tilde {\rho}$ and AGM robustness $\eta$ and show the advantage of our approach, both in accuracy (removing errors due to soft minimum/maximum approximations) and satisfaction performance.
\begin{problem} Consider the nonlinear dynamical system:
\begin{equation}
\label{eq:nonlinear}
\begin{array}{l}
x[k+1]=x[k]+\cos\theta[k]v[k],\\
y[k+1]=y[k]+\sin\theta[k]v[k],\\
\theta[k+1]=\theta[k]+v[k]w[k],\\
\end{array}
\end{equation}
and the desired task \enquote{\textit{Eventually} visit \textit{Reg1} $or$ \textit{Reg2} between $[1,5]$ steps $and$ \textit{eventually} visit $\textit{Reg3}$ between $[6,10]$ steps $and$ \textit{Always} avoid $Obs$}, formally specified as STL formula:
\begin{equation}
\label{eq:ex2}
\begin{array}{l}
\phi_2=(\mathbf{F}_{[1,5]}\;(\textit{Reg1}\vee \textit{Reg2})) \;\wedge(\mathbf{F}_{[6,10]} \textit{ Reg3}) \\ \;\;\;\;\;\;\;\;\;\;\;
\wedge\;(\mathbf{G}_{[0,10]}\;\neg\textit{Obs}),
\end{array}
\end{equation}
where $\textit{Obs}=[3,6]^2$ is the obstacle to avoid, $\textit{Reg1}=[5,7] \times [1,3]$ or $\textit{Reg2}=[1,3] \times [6,8]$ and $\textit{Reg3}=[7,10] \times [7,9]$ are regions to be sequentially visited. State vector $q=[x,y,\theta]$ indicates robot position and orientation with $\mathbf{Q}=[0,10]^2 \times[-2\pi,2\pi]$ and initial state $q_0=[1,1,0]$, and $u=[v,w]$ is the input vector with $U=[-2,2]^2$.
\end{problem}
\begin{figure}[t]
\centering
\begin{tabular}{cc}
\includegraphics{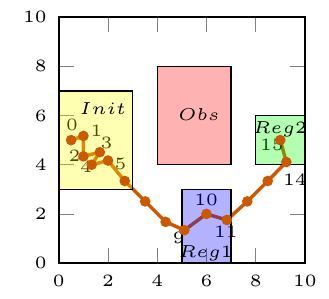}&
\includegraphics{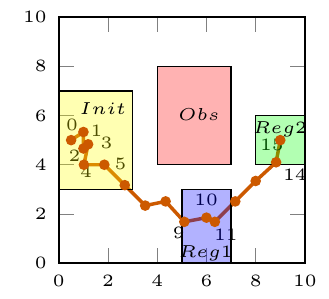}
 \end{tabular}
 \caption{Trajectories with same maximum traditional robustness $\rho=1$ found by Gurobi.}
\label{fig: milp}\vspace{-5pt} 
\end{figure} 
\begin{figure}[t]
\centering
\begin{tabular}{cc}
\includegraphics{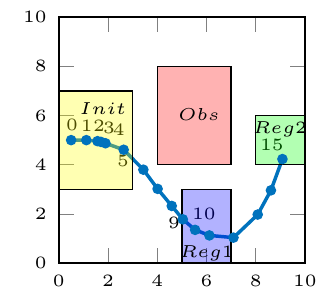}&
\includegraphics{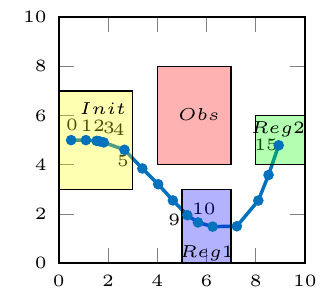}
 \end{tabular}
 \caption{Trajectory with positive AGM robustness $\eta=0.138$ 
 (Left) and after more gradient ascent iterations with $\eta=0.144$ (Right).}
\label{fig:nw}\vspace{-5pt} 
\end{figure} 
\begin{figure}[t]
\centering
\begin{tabular}{cc}
\includegraphics{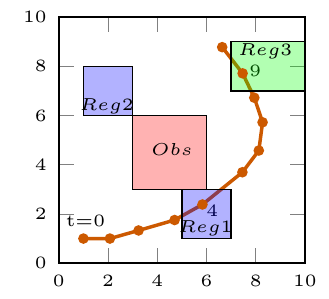}&
\includegraphics{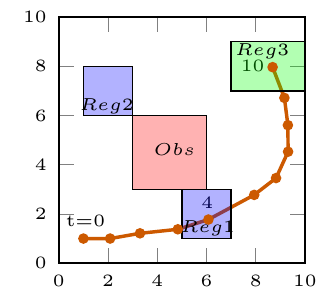}
 \end{tabular}
 \caption{Trajectory with positive approximation robustness $\tilde{\rho}=0.424 \; (\rho=0.461)$ (Left) and after more gradient ascent iterations $\tilde{\rho}=0.731 \; (\rho=0.778)$ (Right)}
\label{fig:awv}\vspace{-5pt} 
\end{figure}  
\begin{figure}[t]
\centering
\begin{tabular}{cc}
\includegraphics{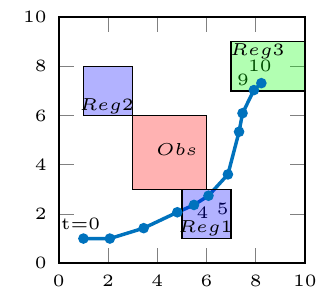}&
\includegraphics{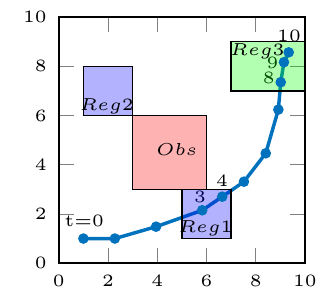}
 \end{tabular}
 \caption{Trajectory with positive AGM robustness $\eta=0.130$ (Left) and after more gradient ascent iterations with $\eta=0.171$ (Right).}
\label{fig:nwv}\vspace{-5pt} 
\end{figure}  
Fig. \ref{fig:awv} and Fig. \ref{fig:nwv} show trajectories satisfying STL constraints in $\phi_2$ obtained using gradient ascent maximizing the approximation robustness $\tilde{\rho}$ with $\beta=10$ in \eqref{eq:soft} and the AGM robustness $\eta$, achieved up to the termination criteria. Although both methods generate trajectories satisfying the specification $\phi_2$, the trajectory with higher approximation robustness visits $Reg1$ ($t=4$) and $Reg3$ ($t=10$) at a single time point while using the AGM robustness, trajectory with higher robustness visits $Reg1$ ($t=3,4$) and $Reg3$ ($t=8,9,10$) \textit{as early as possible and for as long as possible}; forcing trajectory to move toward the center of each region while always keeping distance $|\eta|$ with the obstacle. As discussed earlier, due to the approximation errors resulted from approximating $\max$ and $\min$ functions, traditional robustness $\rho$ and approximation robustness $\tilde{\rho}$ have different values for the same trajectory. 
\subsection{Performance Under Disturbance}
We demonstrate the advantage of maximizing the AGM robustness, rather than the traditional robustness, in a control synthesis problem under external disturbance. 
\begin{problem} Consider a linear dynamical system: 
\begin{equation}
\label{linear}
\begin{array}{l}
x[k+1]=x[k]+u_x[k],\\
y[k+1]=y[k]+u_y[k],
\end{array}
\end{equation}
where $q=[x,y]$ is the state vector indicating robot position with $\mathbf{Q}=[0,6]^2$ and initial state $q_0=[0,1]$, and $u=[u_x,u_y]$ with $U=[-1.5,1.5]^2$. The desired task is \enquote{\textit{Eventually} visit \textit{Reg1} between $[1,5]$ steps $and$ \textit{eventually} visit \textit{Reg2} between $[6,10]$ steps}, formally specified as STL formula:
\begin{equation}
\label{eq:ex3}
\phi_3=(\mathbf{F}_{[1,5]}\;\textit{Reg1})\; \wedge\;(\mathbf{F}_{[6,10]} \;\textit{Reg2}),
\end{equation}
where $\textit{Reg1}=[1,2] \times [3,4]$ and $\textit{Reg2}=[3,4] \times [1,2]$ are regions to be sequentially visited.
\end{problem}
\begin{figure}[t]
\centering
\begin{tabular}{cc}
\includegraphics{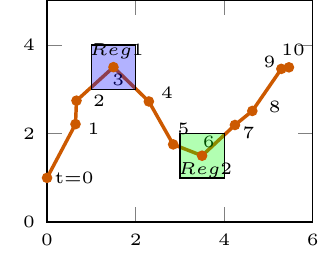}
\includegraphics{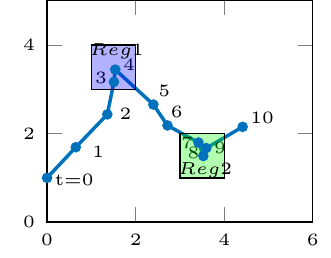}
 \end{tabular}
 \caption{Trajectories with maximum approximation robustness $\tilde{\rho}=0.292$ ($\rho$=0.5) (Left) and positive AGM robustness $\eta=0.173$ (Right).}
\label{fig: lin}
\end{figure} 
\begin{figure}[t]
\centering
\begin{tabular}{cc}
\includegraphics{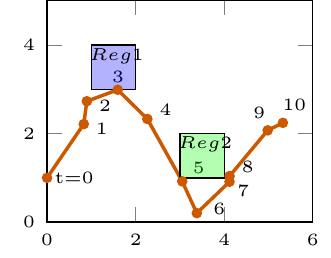}
\includegraphics{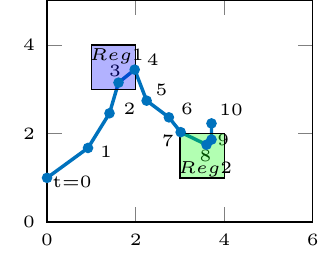}
 \end{tabular}
 \caption{A trajectory generated from disturbed $u_{\tilde{\rho}}^*$ violating $\phi_3$ with negative approximation robustness $\tilde{\rho}=-0.123$ ($\rho=-0.083$) (Left) and a trajectory generated from disturbed $u_{\eta}^*$ satisfying $\phi_3$ with positive AGM robustness $\eta=0.166$ (Right).}
\label{fig: lindis}\vspace{-5pt} 
\end{figure} 
We first find control policies $u^*$ maximizing 
$\tilde {\rho}$ and $\eta$ using gradient ascent. Optimal trajectories satisfying the specification $\phi_3$ are shown in Fig. \ref{fig: lin}. It is clear that maximum approximation robustness 
$\tilde{\rho}$ is achieved when the center of each region is visited for a single time point ($Reg1$ at $t=3$, $Reg2$ at $t=6$). However, by maximizing the AGM robustness $\eta$, \textit{not only the center of each region (maximum satisfaction) is visited at least once, but also each region is visited for more time points} ($Reg1$ at $t=3,4$, $Reg2$ at $t=7,8,9$).

Next, we perturb system by adding a gaussian noise $\mathcal{{N}}(0,\sigma^2)$ to the previously found optimal control policies: 
\begin{equation}
\label{eq: dis}
\begin{array} {l}
u_{\tilde{\rho}}^* \leftarrow  u_{\tilde{\rho}}^* + \mathcal{{N}}(0,\sigma^2),\\
u_{\eta}^* \leftarrow  u_{\eta}^* + \mathcal{{N}}(0,\sigma^2)
\end{array}
\end{equation}
We apply the disturbed control policies \eqref{eq: dis} to the system 
and find the resulting trajectories for different values of $\sigma$ over $100$ simulations. The results show that the disturbed control policy maximizing approximation robustness fails to satisfy the specification in $58\%$ of the times, while by maximizing the AGM robustness, specification fails for an average of $41\%$. Fig. \ref{fig: lindis} (Left) illustrates a resulting trajectory by applying disturbed optimal policy $u_{\tilde{\rho}}^*$ violating $\phi_3$; and (Right) a resulting trajectory by applying disturbed optimal policy $u_{\eta}^*$, still satisfying $\phi_3$ but at different time points and with a smaller robustness score $\eta$. Therefore, the control policy found by maximizing the AGM robustness score performs better when disturbance is added to the system after designing the control input.  
 \section{CONCLUSION AND FUTURE WORK}
\label{sec:conclusion}
We presented a novel average-based robustness score for STL by considering not just the critical subformula or the critical time points but all subformulae at all appropriate time points. We demonstrated that the proposed AGM robustness provides a better satisfaction or violation score both in monitoring and control problems 
compared to the traditional robustness. 
We also showed that the system under external disturbance has, on average, better performance when maximizing the AGM robustness rather than the traditional one.


\bibliographystyle{IEEEtran}
\bibliography{thebibliography}
\end{document}